\newcommand{\col}[1]{\mathrm{col}\big\{#1\big\}}
\newcommand{\grad}[1]{\nabla_{#1^{\tran}} }
\newcommand{\w}{\bm{w}}
\newcommand{\we}{\widetilde{\w}}
\newcommand{\eqdef}{\:\overset{\Delta}{=}\:}
\DeclareMathOperator*{\argmin}{argmin}
\newcommand{\tran}{{\sf T}}
\newtheorem{theorem}{Theorem}
\newtheorem{assumption}{Assumption}
\newtheorem{lemma}{Lemma}
\title{Asynchronous Diffusion Learning with Agent Subsampling
	and Local Updates}
\name{Elsa Rizk\thanks{School of Engineering, École Polytechnique Fédérale de Lausanne (e-mail:\{elsa.rizk, ali.sayed\}@epfl.ch).}, Kun Yuan\thanks{Center for Machine Learning Research, Peking University (e-mail: kunyuan@pku.edu.cn).}, Ali H. Sayed}
\address{}
\begin{document}
\ninept
\maketitle
\begin{abstract}
In this work, we examine a network of agents operating asynchronously, aiming to discover an ideal global model that suits individual local datasets. Our  assumption is that each agent independently chooses when to  participate throughout the algorithm and the specific subset of its neighbourhood with which it will cooperate at any given moment. When an agent chooses to take part, it undergoes multiple local updates before conveying its outcomes to the sub-sampled neighbourhood. Under this setup, we prove that the resulting asynchronous diffusion strategy is stable in the mean-square error sense and provide performance guarantees specifically for the federated learning setting. We illustrate the findings with numerical simulations.

\end{abstract}
\begin{keywords}
distributed systems, diffusion learning, asynchronous network, federated learning
\end{keywords}
\section{Introduction and Related Material}
\label{sec:intro}
Networks of agents comprise individual agents collaborating in the pursuit of a common goal. In the case of distributed optimization, the agents aim to solve a global optimization problem by utilizing local information. Typically, the available solutions involve a local update step followed by an aggregation step among neighbours. Examples of such strategies include incremental methods \cite{Bert1997-inc,Catt2011-inc,lopes2007incremental,Elias09-inc,Nedic01-inc}, consensus methods \cite{DeGroot,Nedic09,xiao2004fast,Johansson2008SubgradientMA}, and diffusion methods \cite{Lopes08,chen2012diffusion,Tu12,chen2015learning,Vlaski21}. Commonly, these methods assume full agent participation in every iteration and a one-to-one ratio of aggregation steps to local update steps. Nonetheless, various situations like agent drop-outs do not satisfy these assumptions \cite{6483403,hadjicostis2013average}, where certain agents might not partake in each iteration. Furthermore, computational and communication limitations \cite{tsianos2012communication,10097010} could necessitate an agent to perform several local updates before sharing outcomes with a subset of its neighbours, as opposed to all of them.

Therefore, this work focuses on asynchronous networks of agents and modifies the algorithms, particularly diffusion-type algorithms, to allow flexibility in agent participation, agent sub-sampling, and local updates. Relevant research on asynchronous distributed learning can be found in \cite{1104412,srivastava2011distributed,kar2009distributed2,kar2009distributed,pmlr-v139-aviv21a,9036910,pmlr-v80-lian18a, zhao2014asynchronous,lopes2010randomized,5757817,arablouei2015analysis}. The primary relevant prior work \cite{zhao2014asynchronous} in the context of asynchronous networks operates under the assumption of independent step-size and combination weights as well as a common optimal model. However, within the scenarios we are investigating, such as federated learning \cite{mcmahan16}, such assumptions do not hold. As a result, we abandon the independence assumption and study the stability and performance of a particular asynchronous distributed setup. Our framework encompasses time-varying network topologies that allow local updates. Therefore, it appears this work would be the first to establish an explicit mean-square deviation (MSD) expression for the federated learning scenario, as well as similar asynchronous learning algorithms that incorporates local updates. 

In the upcoming sections, we describe the asynchronous network and illustrate how the original federated learning algorithms initially presented in \cite{mcmahan16} can be understood as a specific example within this broader framework. Moving forward, the second section provides evidence of the stability of the asynchronous adapt-then-combine (ATC) diffusion algorithm in the federated learning setting. In the subsequent third section, we conduct a comprehensive performance analysis, culminating in the derivation of an MSD expression for the particular case of federated learning. Lastly, in the fourth section, we execute a series of experiments to further explore these concepts.

\section{Asynchronous Network}\label{sec:}
\subsection{Problem Setup}
We consider a network of $K$ agents, similar to the federated or fully decentralized setting, all aiming to solve the convex optimization problem presented as follows:
\begin{align}\label{eq:optProb}
	\min_w  \frac{1}{K}\sum_{k=1}^K  J_k(w),
\end{align}
where the local risk function $J_k(\cdot)$ is defined as an empirical average of the local loss function $Q_k(\cdot;x_{k,n})$ over the local dataset $\{x_{k,n}\}_{n=1}^{N_k}$. The communication among agents is restricted by an underlying graph structure, where the combination matrix denoted by $A$ holds elements $a_{\ell k}$ representing the weight agent $k$ assigns to information shared by agent $\ell$. We impose certain assumptions on the structure of the underlying graph and the nature of the risk and loss functions.

\begin{assumption}[\textbf{Combination matrix}]\label{assum:mat}
	The combination matrix is left-stochastic, namely $\mathds{1}^\tran A = \mathds{1}$ with $a_{k\ell} \geq 0$. 
	\qed
\end{assumption}

\begin{assumption}[\textbf{Risk and loss functions}]\label{assum:fcts}
	The empirical risks $J_{k}(\cdot)$ are $\nu-$strongly convex. The loss functions $Q_{k}(\cdot;\cdot)$ are convex and twice differentiable, namely, for some $\nu > 0$:
	\begin{align} \label{eq:assFctConv}
		&J_{k}(w_2) \geq  \: J_{k}(w_1) + \grad{w}J_{k}(w_1)(w_2-w_1) + \frac{\nu}{2}\Vert w_2 - w_1 \Vert^2,\\
		&Q_{k}(w_2;\cdot) \geq  \: Q_{k}(w_1;\cdot) + \grad{w}Q_{k}(w_1;\cdot) (w_2 - w_1).
	\end{align}
	Furthermore, the loss functions have $\delta-$Lipschitz continuous gradients:
	\begin{equation}\label{eq:assFctLip}
		\Vert \grad{w}Q_{k}(w_2;x_{k,n}) - \grad{w}Q_{k}(w_1;x_{k,n})\Vert \leq \delta \Vert w_2 - w_1\Vert.
	\end{equation}
	\qed
\end{assumption}
\begin{assumption}[\textbf{Bounded and smooth Hessians}]\label{assum:Hess}
	The Hessians have bounded eigenvalues:
	\begin{align}
		\lambda_{\min} \leq \lambda \left( \grad{w}^2J_k(w) \right)\leq  \lambda_{\max},
	\end{align}
	and are locally Lipschitz  in a small neighbourhood around $w^o$ later defined in \eqref{eq:optMod}, namely, there exists $\kappa > 0$ for small $\Delta w$:
	\begin{align}
		\Vert \grad{w}^2 J_k(w^o + \Delta w) - \grad{w}^2J_k(w^o)\Vert \leq \kappa \Vert \Delta w\Vert.
	\end{align}
	\qed
\end{assumption} 

\begin{figure}[h!]
	\centering
	\begin{subfigure}{0.23\textwidth}
		\centering
		\includegraphics[width=\textwidth]{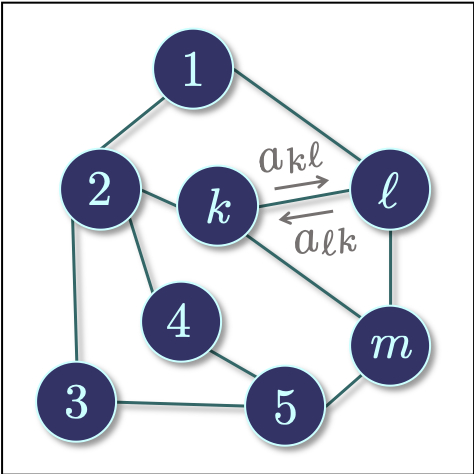}
		\caption{Underlying network.}\label{fig:neta}
	\end{subfigure}
	\begin{subfigure}{0.24\textwidth}
		\centering
		\includegraphics[width=\textwidth]{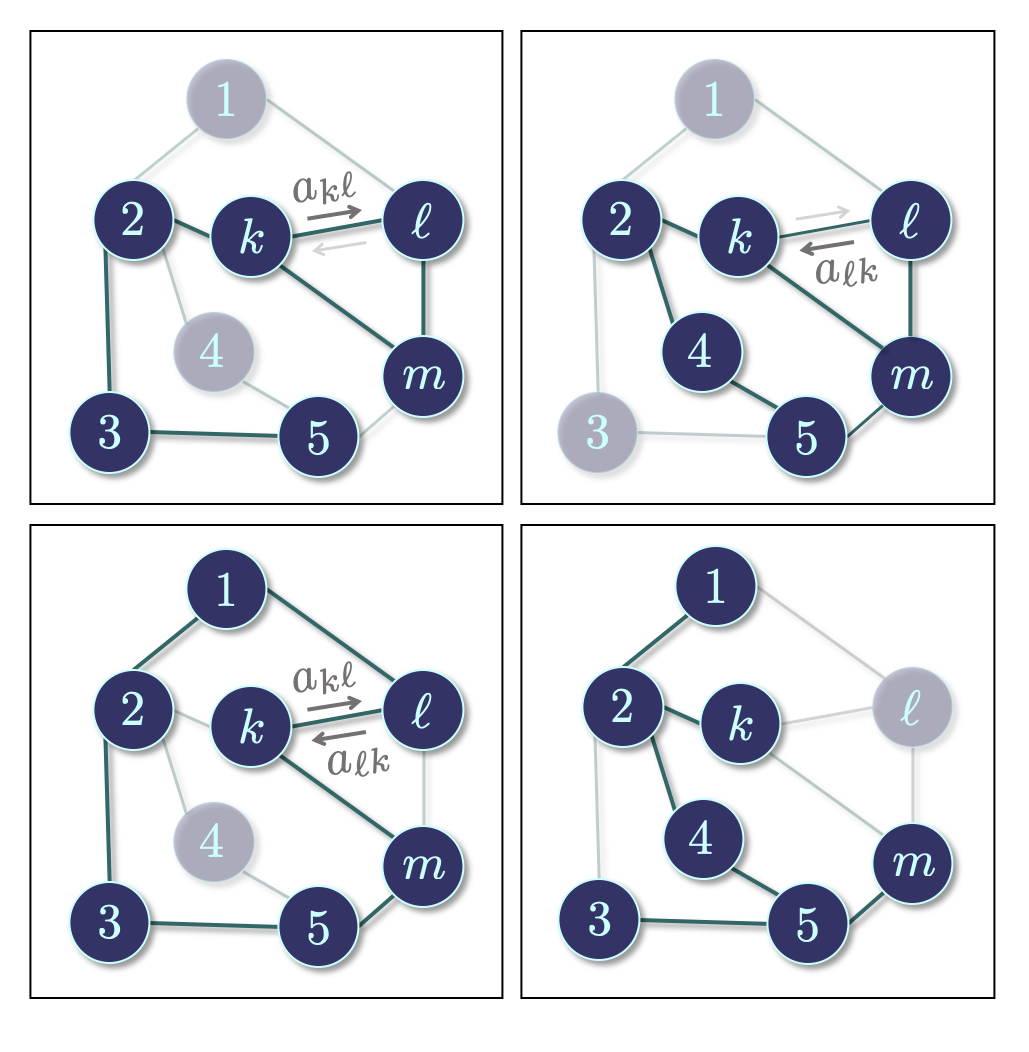}
		\caption{Time varying network.}\label{fig:netb}
		\label{fig:neti}
	\end{subfigure}
	\caption{Illustration of an asynchronous network whose nodes and links change with time.}\label{fig:net}
\end{figure}

We introduce the following assumptions on the agents' mode of operation. During an iteration $i$ of the algorithm, an agent $k$ has the option to engage. In the event of participation, it may sample a subset of its neighbourhood $\mathcal{N}_k$, from which it will aggregate their messages. We attribute a probability $q_k$ to the participation of agent $k$ and we let $q_{\ell k}$ be the sampling probability of agent $\ell$ by $k$. Furthermore, prior to any combination step, agent $k$ runs a total of $T$ local update steps. This setup can be modeled as an asynchronous network, where the combination matrix is time varying and random. Accordingly, we let $t \in {1,2,\cdots,T}$ denote the local iterations while $i$ denotes the global iterations. The time varying and random combination matrix is written as $\bm{A}_{(i-1)T+t}$. Throughout the local iterations, i.e., for  $t\neq T$, the combination matrix simplifies to the identity matrix, i.e., $\bm{A}_{(i-1)T+t} = I$. Yet, once $t=T$, a combination step follows after the $T$th update step. As a result, the combination matrix $\bm{A}_{iT}$ will be a sampled version of the original combination matrix $A$. For example, in Fig. \ref{fig:netb} top left, agent $m$ chooses to participate and it samples agents $\ell$ and $k$ while leaving out agent $5$. Therefore:
\begin{align}
	\bm{a}_{\ell m ,iT} &= a_{\ell m},  \quad	\bm{a}_{ k m, iT} = a_{k m }, \quad \bm{a}_{ 5 m, iT} = 0, \notag \\
	\bm{a}_{mm, iT} &= 1 - a_{\ell m}-a_{ km}.
\end{align}
Since agent $1$ is not participating, then all the weights it attributes to its neighbours will be 0 and its self-weight will be 1, i.e.:
\begin{align}
	\bm{a}_{11,iT} = 1, \quad \bm{a}_{2 1,iT} =\bm{a}_{\ell 1,iT} = 0.
\end{align}
 As such, at $t=T$, the elements of the matrix will be given by:
\begin{align}
	\bm{a}_{ \ell k, iT} = \begin{cases}
		a_{ \ell k}, & \text{with probability } q_k q_{\ell k} \\
		1 - \sum\limits_{m \in \mathcal{N}_k} \bm{a}_{ mk,iT},  & k = \ell \text{ with probability } q_k  \\
		1, & k= \ell \text{ with probabiloty } 1-q_k \\
		0, & \text{otherwise}
	\end{cases}
\end{align}
To ensure the matrix remains left-stochastic, each agent modifies its self-weight based on the neighbours it has sampled. Moreover, the step-size is also time varying and random:
 \begin{align}\label{eq:stepSize}
 	\bm{\mu}_{k,(i-1)T+t} = \begin{cases}
 		\mu, &  \text{with probability } q_k \\
 		0, & \text{otherwise}
 	\end{cases}
 \end{align}
 If we consider the ATC diffusion algorithm, the asynchronous version of it can thus be described as follows:
 \begin{align}
 	\bm{\psi}_{k,(i-1)T+t} &= \w_{k,(i-1)T+t-1} \notag \\
 	&\quad -
 	 \bm{\mu}_{k,(i-1)T+t} \widehat{\grad{w}J_{k}}(\w_{k,(i-1)T+t-1}), \\
 	\w_{k,(i-1)T+t} &= \sum_{\ell \in \mathcal{N}_k} \bm{a}_{ \ell k, (i-1)T+t}  	\bm{\psi}_{\ell,(i-1)T+t}.
 \end{align}

Since the combination matrix is left-stochastic, the algorithm does not converge to the minimizer of the average of the risk functions -- the solution of problem \eqref{eq:optProb}. Instead we can show that on average it will converge to the solution of the weighted average of the risk functions multiplied by the participation probabilities, where the weights are the entries of the Perron eigenvector of the mean combination matrix. Therfore, we let $\overline{p} = \col{\overline{p}_k}$ be the Perron eigenvector of $\mathbb{E} \bm{A}_{iT}$ and define:
\begin{align}\label{eq:optMod}
	w^o \eqdef \argmin \sum_{k=1}^K \overline{p}_k  q_kJ_k(w). 
\end{align}

\noindent{\bf Federated learning}. The following framework can be applied to the federated learning paradigm. The master-slave configuration can be viewed as a fully connected network, where every agent is a neighbour of every other agent. Additionally, the sampling size matches the number of participating agents, signifying that agents are not required to sample their neighbours. By framing the federated system within this interpretation, we can apply the findings discovered in this study. 
 
 In the original FedSGD algorithm \cite{mcmahan16}, each agent actively participates in every iteration, executing local update steps. Consequently, all probabilities are uniformly set to 1, $q_k = q_{\ell k} = 1$. As a result the step-size remains constant $\bm{\mu}_{k,(i-1)T+t} = \mu$. Additionally, the combination matrix alternates between the identity matrix and the full combination matrix with equally weighted entries:
 \begin{align}
 	\bm{A}_{(i-1)T+t} = \begin{cases}
 		\frac{1}{K}\mathds{1}^\tran \mathds{1}, & t = T \\
 		I, & t \neq T.
 	\end{cases}
 \end{align}  
 While, the FedAvg algorithm permits agent dropouts, resulting in non-unitary participation probabilities. The sampling probabilities $q_{ \ell k}$ are set to 1. Consequently, during the global iteration $i$, if $L_i$ agents participate, the combination weights at $t= T$ are given by:
 \begin{align}
 	\bm{a}_{ \ell k,iT} = \begin{cases}
	\frac{1}{ L_i }, &  \ell \text{ participating}  \\
	1, & k = \ell  \text{ not participating} \\
	0, & \text{otherwise}
\end{cases} 
 \end{align}
For $t\neq T$, the combination matrix remains identity. The step-size operates similary as in \eqref{eq:stepSize}. Thus in this particular setting, the combination matrix remains doubly-stochastic during each iteration. As such, the entries of the Perron eigenvector $\overline{p}_k$ of the mean combination matrix are equal to $1/K$, and the optimal model $w^o$ simplifies to the solution of the original optimization problem \eqref{eq:optProb}.

\subsection{Stability Analysis}
We begin by establishing the stability of the algorithm in the mean-square error sense. This involves starting with the formulation of the error recursion and then proceeding to define the gradient noise. Consequently, we define the error as $\we_{k,(i-1)T+t} = w^o - \w_{k,(i-1)T+t}$ and the gradient noise as:
\begin{align}\label{eq:gradNoise}
	\bm{s}_{k,(i-1)T+t} \eqdef & \widehat{\grad{w}J_k}(\w_{k,(i-1)T+t-1}) 
	\notag \\ &
	- {\grad{w}J_k}(\w_{k,(i-1)T+t-1}).
\end{align}
Then, by envoking the mean-value theorem \cite{sayed2014adaptation}, we can express the gradient as:
\begin{align}
	{\grad{w}J_k}(\w_{k,(i-1)T+t-1}) = &- \bm{H}_{k,(i-1)T+t-1}\we_{k,(i-1)T+t-1} \notag \\
	& - \grad{w}J_k(w^o), 
\end{align}
where we define the following terms:
\begin{align}
	\overline{\bm{H}}_{k,(i-1)T+t} &\eqdef \left( I - \bm{\mu}_{k , (i-1)T+t} \bm{H}_{k,(i-1)T+t-1} \right), 
	\\
	\bm{H}_{k,(i-1)T+t-1} &\eqdef \int_{0}^1 \grad{w}^ 2 J_k(w^ o - \tau \we_{k,(i-1)T+t-1}) d\tau .
\end{align}
Accordingly, the expression of the error recursion could be formulated as:
\begin{align}
	\we_{k,(i-1)T+t} = &\sum_{\ell \in \mathcal{N}_k} \bm{a}_{ \ell k, (i-1)T+t} \Big( \overline{\bm{H}}_{\ell,(i-1)T+t} \we_{\ell,(i-1)T+t-1}   
	\notag \\ &  
	+ \bm{\mu}_{\ell,(i-1)T+t} \left( \bm{s}_{\ell,(i-1)T+t}  - \grad{w}J_{\ell}(w^o) \right) \Big).
\end{align} 

Initially, we demonstrate that the stochastic gradient is an unbiased estimate of the true gradient and that the gradient noise has a finite second-order moment.
\begin{lemma}[\textbf{First and second-order moments of gradient noise}]
	The gradient noise defined in \eqref{eq:gradNoise} has zero-mean and bounded second-order moment, namely:
	\begin{align}
		\mathbb{E} \Vert \bm{s}_{k,(i-1)T+t} \Vert^2 \leq \beta_{s}^2 \mathbb{E}\Vert \we_{k,(i-1)T+t-1}\Vert^2 + \sigma_{s}^2,
	\end{align}
	where $\beta_{s}^2$ and $\sigma_{s}^2$ are some constants.
\end{lemma}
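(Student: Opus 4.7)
The plan is to adapt the standard two-step argument for bounding gradient noise from Sayed's adaptation-and-learning framework to the present asynchronous setting. The stochastic gradient will be modeled in the usual way as $\widehat{\grad{w} J_k}(\w) = \grad{w} Q_k(\w; x_{k,\bm{n}})$ for an index $\bm{n}$ drawn uniformly from $\{1,\dots,N_k\}$ independently of past randomness. Letting $\mathcal{F}_{(i-1)T+t-1}$ denote the $\sigma$-field generated by all past iterates, sampled combination matrices, and step sizes, the iterate $\w_{k,(i-1)T+t-1}$ is $\mathcal{F}_{(i-1)T+t-1}$-measurable, and since $J_k$ is the empirical mean of $Q_k(\cdot;x_{k,n})$ over the local dataset, the conditional expectation of $\widehat{\grad{w} J_k}$ coincides with $\grad{w} J_k$. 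This yields $\mathbb{E}[\bm{s}_{k,(i-1)T+t}\mid\mathcal{F}_{(i-1)T+t-1}] = 0$ and, in particular, the unconditional zero-mean statement.

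For the second-moment bound, the next step is to introduce $w^o$ as a reference point and decompose
\begin{align*}
\bm{s}_{k,(i-1)T+t} &= \Bigl[\widehat{\grad{w} J_k}(\w) - \widehat{\grad{w} J_k}(w^o) - \grad{w} J_k(\w) + \grad{w} J_k(w^o)\Bigr] \\
&\quad + \Bigl[\widehat{\grad{w} J_k}(w^o) - \grad{w} J_k(w^o)\Bigr],
\end{align*}
where iteration indices on $\w$ are suppressed for brevity. Applying $\|a+b\|^2 \leq 2\|a\|^2 + 2\|b\|^2$ twice separates the four differences in the first bracket, each of which is bounded in norm by $\delta\,\|\we_{k,(i-1)T+t-1}\|$ pathwise in $x_{k,n}$ by the Lipschitz property \eqref{eq:assFctLip} in Assumption~2. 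The second bracket does not involve the iterate, so its expected squared norm reduces to a finite constant $\sigma_0^2$, bounded in terms of the sample variance of $\grad{w} Q_k(w^o; x_{k,n})$ over the finite local dataset. Taking total expectation and collecting constants produces the advertised bound with $\beta_s^2 = 8\delta^2$ and $\sigma_s^2 = 2\sigma_0^2$.

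The main point requiring care is the conditioning structure introduced by the asynchronous mechanism: the sample index $\bm{n}$ drawn at time $(i-1)T+t$ must be taken independent of $\bm{\mu}_{k,(i-1)T+t}$ and $\bm{A}_{(i-1)T+t}$ so that the conditional expectations separate cleanly, and the argument must remain valid on the event $\{\bm{\mu}_{k,(i-1)T+t}=0\}$ where no gradient is actually used (on which the bound is trivial since we are only upper-bounding $\mathbb{E}\|\bm{s}\|^2$). Once this measurability bookkeeping is made explicit, the remaining estimates are routine deterministic inequalities of the Jensen/Lipschitz type and no deeper obstacle arises.
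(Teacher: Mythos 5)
Your proof is correct and follows essentially the argument the paper intends (it defers to the standard treatment in Sayed's monograph): condition on the past to get the zero-mean property, then add and subtract the stochastic and true gradients at $w^o$, bound the iterate-dependent differences via the $\delta$-Lipschitz gradients of $Q_k$ (and hence of their empirical average $J_k$), and absorb the residual noise at $w^o$ into a finite constant, correctly avoiding any reliance on $\grad{w}J_k(w^o)=0$. Your attention to the measurability of the sample index relative to the participation randomness is exactly the bookkeeping this asynchronous setting requires.
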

\begin{proof}
	Proof omitted due to space limitations.
\end{proof}
Following arguments similar to \cite{sayed2014adaptation}, it can be demonstrated that the algorithm achieves exponential convergence to a region around the true model. 
\begin{theorem}[\textbf{Mean-square stability}]\label{thrm:stab}
	Under assumptions \ref{assum:mat}, \ref{assum:fcts}, \ref{assum:Hess}, and for small enough step-size:
	\begin{align}
		\mu \leq \frac{2 \lambda_{\min}}{ \lambda^2_{\max} + \beta^2_{s}},
	\end{align}
	the individual errors converge exponentially fast:
	\begin{align}
		\limsup_{i \to \infty} \mathbb{E}\Vert \we_{k,(i-1)T+t} \Vert^ 2 \leq \frac{\sigma_{s}^2}{1-\gamma} \mu^2 = O(\mu),
	\end{align}
	where  $\gamma \eqdef \max\limits_k 1 - 2\mu q_k \lambda_{\min} + \mu^2 q_k (\lambda_{\max}^2 + \beta_{s}^2) \in [0,1)$ is the convergence rate.
\end{theorem}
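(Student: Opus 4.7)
My plan is to follow the mean-square template of \cite{sayed2014adaptation}, extended to accommodate the random step-size $\bm{\mu}_{k,(i-1)T+t}$, the random combination matrix $\bm{A}_{(i-1)T+t}$, and the alternation between local-update ($t \neq T$) and combination ($t = T$) phases. The argument reduces to producing a one-step contraction on $\mathbb{E}\|\we_{k,(i-1)T+t}\|^2$ with rate $\gamma$ and additive term $O(\mu^2)$, and then unrolling it.

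For a local update step ($t \neq T$), the matrix $\bm{A}_{(i-1)T+t}$ collapses to the identity, leaving the per-agent recursion
\begin{align*}
\we_{k,(i-1)T+t} = \overline{\bm{H}}_{k,(i-1)T+t}\we_{k,(i-1)T+t-1} + \bm{\mu}_{k,(i-1)T+t}\bigl(\bm{s}_{k,(i-1)T+t} - \grad{w}J_k(w^o)\bigr).
\end{align*}
I would take squared norms, condition on the natural filtration up to step $(i-1)T+t-1$, and use the zero-mean property of the gradient noise together with the independence of $\bm{\mu}_{k,(i-1)T+t}$ from the past. Assumption~\ref{assum:Hess} yields $\|I - \mu \bm{H}_{k,(i-1)T+t-1}\|^2 \leq 1 - 2\mu\lambda_{\min} + \mu^2\lambda_{\max}^2$ on the event $\{\bm{\mu}_{k,(i-1)T+t} = \mu\}$, which has probability $q_k$, and an operator norm of $1$ otherwise. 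Combining this with the second-moment bound on $\bm{s}_{k,(i-1)T+t}$ from the preceding lemma, and folding the deterministic offset $\grad{w}J_k(w^o)$ into the constant via the standard $(a+b)^2 \leq (1+\alpha)a^2 + (1+\alpha^{-1})b^2$ splitting, I obtain the one-step per-agent contraction
\begin{align*}
\mathbb{E}\|\we_{k,(i-1)T+t}\|^2 \leq \bigl(1 - 2\mu q_k\lambda_{\min} + \mu^2 q_k(\lambda_{\max}^2 + \beta_s^2)\bigr)\mathbb{E}\|\we_{k,(i-1)T+t-1}\|^2 + \mu^2\sigma_s^2.
\end{align*}

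For the combination step ($t = T$), the self-weight correction makes $\bm{A}_{iT}$ left-stochastic deterministically, so $\sum_{\ell \in \mathcal{N}_k}\bm{a}_{\ell k,iT} = 1$ almost surely. I would apply Jensen's inequality inside the squared norm to upper bound $\|\we_{k,iT}\|^2$ by the convex combination, with weights $\bm{a}_{\ell k,iT}$, of the per-agent inner expressions. Conditioning first on the sampling pattern, the inner terms admit exactly the same bound as in the local-update paragraph, so passing to the full expectation and maximizing over $\ell$ preserves the contraction with $q_k$ replaced by $\max_\ell q_\ell$ -- that is, $\gamma$ as defined in the theorem.

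Unrolling the contraction across all $t$ within a global round and across global rounds, and setting $M_j \eqdef \max_k \mathbb{E}\|\we_{k,j}\|^2$, I obtain $M_j \leq \gamma M_{j-1} + \mu^2\sigma_s^2$ and hence $M_j \leq \gamma^j M_0 + \mu^2\sigma_s^2 \sum_{r=0}^{j-1}\gamma^r$. The step-size condition $\mu \leq 2\lambda_{\min}/(\lambda_{\max}^2 + \beta_s^2)$ ensures $\gamma \in [0,1)$, and the geometric sum yields the claimed $\limsup$ bound $\mu^2\sigma_s^2/(1-\gamma)$. I expect the main obstacle to be the bookkeeping at the combination step: the weights $\bm{a}_{\ell k,iT}$, the step-sizes $\bm{\mu}_{\ell,iT}$, and the noises $\bm{s}_{\ell,iT}$ are all coupled through the participation indicator of agent $\ell$, so the conditional expectations must be taken in the correct order -- first over the data-induced gradient noise given the sampling pattern, then over the sampling pattern itself -- to avoid introducing spurious dependencies between the combination weights and the inner per-agent contraction.
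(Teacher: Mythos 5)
The paper omits its own proof (it says only that the argument follows \cite{sayed2014adaptation}), and your sketch does match that template in structure: the averaged one-step factor $(1-q_k)\cdot 1 + q_k\bigl(1-2\mu\lambda_{\min}+\mu^2(\lambda_{\max}^2+\beta_s^2)\bigr)$ reproduces $\gamma$ exactly. The genuine gap is your treatment of the bias term $\bm{\mu}_{k,(i-1)T+t}\grad{w}J_k(w^o)$. Because $w^o$ minimizes the weighted sum $\sum_k \overline{p}_k q_k J_k$ and not each $J_k$ individually, the local gradients $\grad{w}J_k(w^o)$ do not vanish. Folding this offset into the additive constant via $(a+b)^2\le(1+\alpha)a^2+(1+\alpha^{-1})b^2$ forces $\alpha=O(\mu)$ in order to keep the contraction factor below one, so the offset contributes $(1+\alpha^{-1})\,\mu^2 q_k\Vert\grad{w}J_k(w^o)\Vert^2=O(\mu)$, not $O(\mu^2)$, to the driving term; after dividing by $1-\gamma=O(\mu)$, your decoupled per-agent recursion yields only $\limsup_i\mathbb{E}\Vert\we_{k,(i-1)T+t}\Vert^2=O(1)$ rather than the claimed $\sigma_s^2\mu^2/(1-\gamma)$. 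Recovering the theorem's bound requires a network-level argument: stack the agent errors, pass to the Perron coordinates of $\mathbb{E}\bm{A}_{iT}$, and exploit the cancellation $\sum_k\overline{p}_k q_k\grad{w}J_k(w^o)=0$ implied by the definition \eqref{eq:optMod}, so that the bias drives only the disagreement component (damped by the mixing of $A$ at the $t=T$ steps) and enters the steady-state error at higher order. A purely per-agent contraction cannot see this cancellation, so as written your recursion $M_j\le\gamma M_{j-1}+\mu^2\sigma_s^2$ is not established.

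Two smaller points. First, at the combination step the worst-case factor corresponds to the \emph{least} active agent, i.e.\ $\min_\ell q_\ell$ --- which is where the $\max_k$ in the definition of $\gamma$ is attained, since the expression is decreasing in $q_k$ under the stated step-size condition --- not $\max_\ell q_\ell$ as you wrote; this is consistent with the paper's remark that the slowest agent dictates the rate. Second, the coupling you flag at $t=T$ is slightly misattributed: $\bm{a}_{\ell k,iT}$ is governed by agent $k$'s participation and sampling indicators, while $\bm{\mu}_{\ell,iT}$ is tied to agent $\ell$'s own indicator, so for $\ell\neq k$ these are independent and only the $\ell=k$ (self-weight) term is genuinely coupled; the order of conditioning you propose is still the right way to handle that term.
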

\begin{proof}
	Proof omitted due to space limitations.
\end{proof}
As is evident from the theorem's statement, the rate and region of convergence are determined by the least active agent. To put it differently, the agent with the lowest participation probability $q_k$ slows down the overall algorithm. Moreover, given this agent's infrequent participation, it negatively impacts the overall network performance. Consequently, the strength of the network is contingent on its most fragile component.

In order to conduct the performance analysis, it is necessary to examine the fourth-order stability of the algorithm. Thus, by assuming that the gradient noise has bounded fourth-order moment, we can further expand upon the previous result in a manner similar to \cite{sayed2014adaptation}. 
\begin{theorem}[\textbf{Fourth-order stability}]\label{thrm:4thStab}
	If the gradient noise has a bounded fourth-order moment:
	\begin{align}
		\mathbb{E} \Vert \bm{s}_{k,(i-1)T+t}\Vert^4 \leq \beta_{s}^4 \mathbb{E} \Vert \we_{k,(i-1)T+t-1}\Vert^4 + \sigma_{s}^4,
	\end{align}
	and for small enough step-size, then:
	\begin{align}
		\limsup_{i\to \infty} \mathbb{E} \Vert \we_{k,(i-1)T+t}\Vert^4 \leq O(\mu^2).
	\end{align}
\end{theorem}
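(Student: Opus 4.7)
The plan is to mirror the second-order analysis of Theorem \ref{thrm:stab} at the fourth-order level, following the fourth-order argument of \cite{sayed2014adaptation}. I would start from the error recursion established just before this theorem and raise both sides to the fourth power of the norm. Since the random combination matrix $\bm{A}_{(i-1)T+t}$ is left-stochastic, with $\sum_{\ell \in \mathcal{N}_k}\bm{a}_{\ell k,(i-1)T+t}=1$, Jensen's inequality applied to the convex map $\|\cdot\|^4$ passes the fourth power inside the convex combination. The analysis thus reduces to controlling the per-neighbour summand $\|\overline{\bm{H}}_\ell \we_\ell + \bm{\mu}_\ell(\bm{s}_\ell - \grad{w}J_\ell(w^o))\|^4$ for each $\ell \in \mathcal{N}_k$.

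Writing $a_\ell = \overline{\bm{H}}_\ell \we_\ell$ and $b_\ell = \bm{\mu}_\ell(\bm{s}_\ell - \grad{w}J_\ell(w^o))$, I would apply the binomial expansion $\|a_\ell + b_\ell\|^4 \leq (\|a_\ell\|+\|b_\ell\|)^4 = \sum_{j=0}^{4}\binom{4}{j}\|a_\ell\|^{4-j}\|b_\ell\|^{j}$ and bound each mixed summand by Young's inequality with parameters of order $\mu$. Under Assumption \ref{assum:Hess}, the pure $\|a_\ell\|^4$ term contributes at most $(1-\bm{\mu}_\ell\lambda_{\min})^4\|\we_\ell\|^4$; the assumed fourth-moment bound on $\bm{s}_\ell$ together with the second-moment bound already used in Theorem \ref{thrm:stab} turn the cross and pure $b_\ell$ terms into contributions of the form $\mu^2 C_1 \|\we_\ell\|^4 + \mu^2 C_2 \|\we_\ell\|^2 + \mu^4 C_3$. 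Conditioning on the natural filtration generated by $\{\we_{\ell,(i-1)T+t-1}\}_\ell$ and averaging over the independent gradient noise, the random step-size in \eqref{eq:stepSize}, and the combination weights yields a scalar recursion of the form
\begin{align*}
\mathbb{E}\|\we_{k,(i-1)T+t}\|^4 &\leq \gamma_4 \max_\ell \mathbb{E}\|\we_{\ell,(i-1)T+t-1}\|^4 \\
&\quad + c_1\mu^2 \max_\ell \mathbb{E}\|\we_{\ell,(i-1)T+t-1}\|^2 + c_2\mu^4,
\end{align*}
with $\gamma_4 = 1 - \Theta(\mu) \in [0,1)$ provided $\mu$ is small enough.

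The advertised rate then follows from a two-step argument. Theorem \ref{thrm:stab} supplies $\limsup_i \mathbb{E}\|\we_\ell\|^2 = O(\mu)$, so the steady-state forcing collapses to $c_1\mu^2 \cdot O(\mu) + c_2\mu^4 = O(\mu^3)$, and dividing by the contraction gap $1-\gamma_4 = \Theta(\mu)$ delivers the claimed $O(\mu^2)$ bound on $\limsup_i \mathbb{E}\|\we_{k,(i-1)T+t}\|^4$. The main obstacle is the Young-parameter bookkeeping: the cross terms $\|a_\ell\|^3\|b_\ell\|$ and $\|a_\ell\|\|b_\ell\|^3$ generate byproducts that must be absorbed into the leading $(1-\bm{\mu}_\ell\lambda_{\min})^4$ contraction without inflating the forcing beyond the $O(\mu^2)\mathbb{E}\|\we\|^2 + O(\mu^4)$ budget described above. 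A secondary subtlety is that the random step-size $\bm{\mu}_\ell$ and the combination weights $\bm{a}_{\ell k,iT}$ share the common participation indicator at the $T$-th local step, so expectations must be handled by conditioning on the iterate filtration rather than by naively factorizing them as products.
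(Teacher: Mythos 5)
Your overall architecture is the right one --- Jensen's inequality over the left-stochastic convex combination, a per-agent fourth-moment recursion with contraction factor $1-\Theta(\mu)$, coupling to the second-moment result of Theorem \ref{thrm:stab} so that the $\mu^2\,\mathbb{E}\Vert\we\Vert^2$ forcing becomes $O(\mu^3)$, and division by the $\Theta(\mu)$ gap --- and you are also right that the shared participation indicator forbids factorizing $\bm{\mu}_{k,iT}$ and $\bm{a}_{\ell k,iT}$. The fatal step is the binomial expansion $\Vert a_\ell+b_\ell\Vert^4\le(\Vert a_\ell\Vert+\Vert b_\ell\Vert)^4$: it discards the one property that makes the theorem true at the stated order, namely the conditional zero mean of the gradient noise. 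After the triangle inequality, the $j=1$ cross term $4\Vert a_\ell\Vert^3\Vert b_\ell\Vert$ has conditional expectation of order $\mu\Vert a_\ell\Vert^3\big(\sigma_{s}+\Vert\grad{w}J_\ell(w^o)\Vert\big)$, and no choice of Young parameters rescues it: to absorb the $\Vert a_\ell\Vert^4$ byproduct into the contraction you must take the Young weight $\theta=O(\mu)$, which leaves a residual $\theta^{-3}\mu^4\big(\sigma_{s}+\Vert\grad{w}J_\ell(w^o)\Vert\big)^4=\Omega(\mu)$ --- a constant forcing of order $\mu$ that, divided by the $\Theta(\mu)$ gap, yields only $\limsup_i\mathbb{E}\Vert\we\Vert^4=O(1)$, not $O(\mu^2)$. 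Taking $\theta$ constant instead destroys the contraction outright, and the $\beta_{s}\Vert\we_\ell\Vert$ part of $\mathbb{E}\Vert\bm{s}_\ell\Vert$ produces a $4\mu\beta_{s}\Vert\we_\ell\Vert^4$ contribution that is absorbable only under the unwarranted condition $\beta_{s}<q_k\lambda_{\min}$.

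The repair, which is the route in \cite{sayed2014adaptation} that the paper follows, is to expand $\Vert a+b\Vert^4=\big(\Vert a\Vert^2+2a^\tran b+\Vert b\Vert^2\big)^2$ and take conditional expectations \emph{before} passing to norms: the dominant odd term $4\Vert a\Vert^2 a^\tran b$ then vanishes exactly for the noise component, and the surviving odd term $4\Vert b\Vert^2 a^\tran b\le 4\Vert a\Vert\,\Vert b\Vert^3$ carries $\mathbb{E}\Vert b\Vert^3=O(\mu^3)$, which is harmless. For the same reason you must keep the deterministic drift $-\bm{\mu}_\ell\grad{w}J_\ell(w^o)$ separate from $\bm{s}_\ell$ rather than lumping both into one $b_\ell$: the drift is not conditionally zero mean, and its first-order cross term is removed by the combination step through the relation $\sum_k\overline{p}_k q_k\grad{w}J_k(w^o)=0$ defining $w^o$ in \eqref{eq:optMod}, not by Young's inequality. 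Everything downstream of the cross-term treatment in your write-up --- the $\Vert a\Vert^2\,\mathbb{E}\Vert b\Vert^2$ term giving $\mu^2\cdot O(\mu)=O(\mu^3)$, the $\mathbb{E}\Vert b\Vert^4=O(\mu^4)$ term, and the final division by $1-\gamma_4=\Theta(\mu)$ --- is correct as stated.
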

\begin{proof}
	Proof omitted due to space limitations.
\end{proof}

\subsection{Performance Analysis}
We are now ready to proceed with the performance analysis, wherein we will present an expression for the MSD for the federated setting. However, before delving into this, we lay out the following assumption concerning the noise process.

\begin{assumption}[\textbf{Noise process}]\label{assum:noiseProc}
	Define the covariance of the gradient noise:
	\begin{align}
		R_{k,(i-1)T + t}( w) &\eqdef \mathbb{E} \bm{s}_{k,(i-1)T +t}(w)\bm{s}^\tran_{k,(i-1)T +t}(w).
	\end{align}
	Then, for some positive constants $\kappa_s$ and $\alpha_s$, the covariance statisfies the following Lipschitz condition:
	\begin{align}
	&	\Vert \mathrm{diag}\{R_{k,(i-1)T + t}(w^o) - R_{k,(i-1)T + t}(\w_{k,(i-1)T +t-1})\}\Vert 
		\notag \\
	&	\leq \kappa_s \Vert \col{\we_{k,(i-1)T + t-1} }\Vert^{\alpha_s}, 
	\end{align} 
	and the following limit exists:
	\begin{align}
		R_k \eqdef \lim_{i\to \infty} R_{k,(i-1)T+t}(w^o).
	\end{align}
\qed
\end{assumption}

Using the aforementioned assumption and the smoothness assumption of the Hessians, we can derive an expression for the MSD. We introduce the matrix $\mathcal{G}_t$: 
\begin{align}
	 \mathcal{G}_t \eqdef &\mathbb{E} \left\{ \bm{\mathcal{A}}_{(i-1)T+t}^\tran \left( I- \bm{\mathcal{M}}_{k,(i-1)T+t}  \mathrm{diag}\{\grad{w}^2J_k(w^o)\} \right) \right.
	 \notag \\
	 & \left. \otimes_b \bm{\mathcal{A}}_{(i-1)T+t}^\tran \left( I- \bm{\mathcal{M}}_{k,(i-1)T+t}  \mathrm{diag}\{\grad{w}^2J_k(w^o)\} \right) \right\}
	 , 
\end{align}
where the operator $\otimes_b$ represents the block Kronecker product and:
\begin{align}
	\bm{\mathcal{A}}_{(i-1)T+t} &\eqdef \bm{A}_{(i-1)T+t} \otimes I, 
	\\
	\bm{\mathcal{M}}_{(i-1)T+t} &\eqdef 	\mathrm{diag}\{\bm{\mu}_{k,(i-1)T+t}\}.
\end{align}
For $t\neq T$, the matrix simplifies to:
\begin{align}
	\mathcal{G}_t  &= \mathrm{diag} \left\{ 
		G_{\ell k}
	\right\}, \\
	G_{\ell k} &\eqdef \begin{cases}
		(I-\mu q_{\ell}\grad{w}^2 J_{\ell}(w^o))  
		(I-\mu q_{k}\grad{w}^2 J_{k}(w^o)), & \ell \neq k \\
		I-q_k \mu \grad{w}^2 J_{k}(w^o), & \ell = k
	\end{cases}
\end{align}
At $t=T$, the matrix $\mathcal{G}_T$ captures the interdependencies among neighbours and the impact of neighbour sampling probabilities $q_{\ell k}$. Due to the complexity of the expression of $\mathcal{G}_T$, we omit its inclusion. 

We next introduce the matrix $\mathcal{C}_t$ which captures the dependency between the step-size and combination weights:
\begin{align}
	 \mathcal{C}_t \eqdef  &\mathbb{E}\left\{  (\bm{\mathcal{A}}_{(i-1)T+t}\otimes_b \bm{\mathcal{A}}_{(i-1)T+t})  \right.
	 \notag \\
	 &\times \left. (\bm{\mathcal{M}}_{(i-1)T+t}\otimes_b\bm{\mathcal{M}}_{(i-1)T+t}) \right\}.
\end{align}
For similar reasons, we refrain from explicitly formulating the matrix expression at $t=T$. Nonetheless, for $t\neq T$, we have:
\begin{align}
	\mathcal{C}_t &= \mathrm{diag} \left \{
		C_{\ell k}
 \right\}, \\
 	C_{\ell k} &\eqdef \begin{cases}
 		q_{k}q_{\ell}\mu^2, & \ell \neq k \\
 		q_k \mu^2,  & \ell = k
 	\end{cases}
\end{align}
Subsequently, we can formulate an expression for the MSD. 

\begin{theorem}[\textbf{Steady-state MSD}]\label{thrm:MSD}
	It holds that:
	\begin{align}\label{eq:MSDexp}
		\mathrm{MSD} = \frac{1}{K} z^\tran \mathrm{bvec}(I) + O(\mu^{1+ \alpha_0}),
	\end{align}
	where the block vectorization operator $\mathrm{bvec}$ stacks the columns of the blocks of the matrix, and:  
	\begin{align}
		\alpha_0 &\eqdef \frac{1}{2} \min \{ 1, \alpha_s\}, 
		\\
		z &\eqdef \left(I-\left(\mathcal{G}_T\mathcal{G}_t^{T-1} \right)^
	\tran\right)^{-1}  	\mathcal{C} \: \mathrm{bvec}(\mathrm{diag}\{R_k\} ),
  \\
  \mathcal{C} &\eqdef \left(I+\left(\mathcal{G}_T\mathcal{G}_t^{T-1}\right)^\tran \right) \mathcal{C}_T + \mathcal{G}_T^\tran\sum_{j=1}^{T-1}  \left(\mathcal{G}_t^{j-1}\right)^\tran \mathcal{C}_t .
	\end{align}
\end{theorem}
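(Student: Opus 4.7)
The plan is to follow the standard low-dimensional mean-square performance analysis of diffusion strategies from \cite{sayed2014adaptation}, adapted to accommodate the $T$ local updates occurring between successive combination steps. The overall strategy is to derive a vectorized one-global-step recursion for the second-moment matrix of the stacked error vector, and then solve the associated discrete-time Lyapunov equation in steady state.

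First, I would stack the individual errors into a network vector $\widetilde{\bm{\mathcal{W}}}_{(i-1)T+t} = \col{\we_{k,(i-1)T+t}}$ and rewrite the error recursion from the excerpt in compact matrix form using $\bm{\mathcal{A}}_{(i-1)T+t}$, $\bm{\mathcal{M}}_{(i-1)T+t}$, and $\mathrm{diag}\{\bm{H}_{k,(i-1)T+t-1}\}$. By Assumption~\ref{assum:Hess}, I would replace the random matrices $\bm{H}_{k,(i-1)T+t-1}$ by their deterministic limits $\grad{w}^2 J_k(w^o)$, and by Assumption~\ref{assum:noiseProc} replace $R_{k,(i-1)T+t}(\w_{k,(i-1)T+t-1})$ by $R_k$. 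Theorem~\ref{thrm:4thStab} guarantees that the combined perturbation from both linearizations contributes only $O(\mu^{1+\alpha_0})$ to the second-order moment, with $\alpha_0 = \tfrac{1}{2}\min\{1,\alpha_s\}$. In the federated regime, the drift term $\grad{w}J_k(w^o)$ is absorbed because $w^o$ minimizes $\sum_k q_k J_k(w)$ and the mean combination matrix is doubly-stochastic.

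Second, setting $\Sigma_{(i-1)T+t} = \mathbb{E}\widetilde{\bm{\mathcal{W}}}_{(i-1)T+t}\widetilde{\bm{\mathcal{W}}}_{(i-1)T+t}^\tran$ and applying the block-Kronecker identity $\mathrm{bvec}(XYZ) = (Z^\tran\otimes_b X)\mathrm{bvec}(Y)$, the linearized error recursion yields, for a pure local iteration $t\neq T$,
\[
\mathrm{bvec}(\Sigma_{(i-1)T+t}) = \mathcal{G}_t^\tran\,\mathrm{bvec}(\Sigma_{(i-1)T+t-1}) + \mathcal{C}_t\,\mathrm{bvec}(\mathrm{diag}\{R_k\}) + O(\mu^{1+\alpha_0}),
\]
and the analogous identity with $\mathcal{G}_T$ and $\mathcal{C}_T$ in place of $\mathcal{G}_t$ and $\mathcal{C}_t$ at the combination step $t=T$. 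The cross-terms between the gradient noise and past iterates vanish by the conditional zero-mean property of the noise established in the first-order-moment lemma. Composing the $T-1$ local sub-iterations with the trailing combination step yields a single-global-iteration recursion
\[
\mathrm{bvec}(\Sigma_{iT}) = (\mathcal{G}_T\mathcal{G}_t^{T-1})^\tran\,\mathrm{bvec}(\Sigma_{(i-1)T}) + \mathcal{C}\,\mathrm{bvec}(\mathrm{diag}\{R_k\}) + O(\mu^{1+\alpha_0}),
\]
with $\mathcal{C}$ aggregating the noise contributions of all $T$ sub-iterations, each propagated through the remaining $\mathcal{G}_t$ and $\mathcal{G}_T$ matrices, matching the stated closed form. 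For $\mu$ small enough the spectral radius of $(\mathcal{G}_T\mathcal{G}_t^{T-1})^\tran$ is strictly less than one (inherited from the stability argument underlying Theorem~\ref{thrm:stab}), so passing to the limit gives $\mathrm{bvec}(\Sigma_\infty) = z + O(\mu^{1+\alpha_0})$, and \eqref{eq:MSDexp} follows from $\mathrm{MSD} = \tfrac{1}{K}\mathrm{Tr}(\Sigma_\infty) = \tfrac{1}{K}\mathrm{bvec}(I)^\tran\mathrm{bvec}(\Sigma_\infty)$.

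The main obstacle will be the evaluation of the block-Kronecker expectations defining $\mathcal{G}_T$ and $\mathcal{C}_T$. Unlike in synchronous settings, $\mathbb{E}[\bm{\mathcal{A}}_{iT}\otimes_b\bm{\mathcal{A}}_{iT}]$ does not factor as $(\mathbb{E}\bm{\mathcal{A}}_{iT})\otimes_b(\mathbb{E}\bm{\mathcal{A}}_{iT})$, since agent participation and neighbour sampling induce non-trivial joint statistics on the columns of $\bm{A}_{iT}$. These expectations must be computed directly from the joint law specified by the probabilities $q_k$ and $q_{\ell k}$, while also combining correctly with $\bm{\mathcal{M}}_{(i-1)T+t}$. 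A secondary bookkeeping challenge lies in tracking how noise injected at each of the $T$ sub-iterations propagates forward through the remaining local updates and the combination step, which produces the particular summation structure appearing in $\mathcal{C}$.
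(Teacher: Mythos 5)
The paper's own proof is omitted, so your sketch can only be judged against the stated result; your overall route --- linearize the error recursion around $w^o$, use the fourth-order stability to confine the Hessian and noise-covariance perturbations to $O(\mu^{1+\alpha_0})$, vectorize the second-moment recursion with the block-Kronecker identity, compose the $T$ sub-steps into a single global-iteration Lyapunov recursion, and invert $I-(\mathcal{G}_T\mathcal{G}_t^{T-1})^\tran$ at steady state --- is the standard analysis of \cite{sayed2014adaptation} and is surely the intended one. You also correctly identify the two genuine technical difficulties: $\mathbb{E}\{\bm{\mathcal{A}}_{iT}\otimes_b\bm{\mathcal{A}}_{iT}\}$ does not factor because of the coupled participation/sampling events and its joint law with $\bm{\mathcal{M}}_{iT}$, and the noise injected at each sub-iteration must be propagated through the remaining sub-steps.

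There is, however, a concrete gap at the point where you assert that the composed driving term ``matches the stated closed form.'' Composing your own per-step recursions, $\sigma_{(i-1)T+t}=\mathcal{G}_t^\tran\sigma_{(i-1)T+t-1}+\mathcal{C}_t r$ for $t=1,\dots,T-1$ followed by the combination step, gives
\begin{align}
\sigma_{iT}=\left(\mathcal{G}_t^{T-1}\mathcal{G}_T\right)^{\tran}\sigma_{(i-1)T}+\Big(\mathcal{C}_T+\mathcal{G}_T^\tran\sum_{j=1}^{T-1}\left(\mathcal{G}_t^{j-1}\right)^{\tran}\mathcal{C}_t\Big)\,r,
\end{align}
with $r=\mathrm{bvec}(\mathrm{diag}\{R_k\})$. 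The theorem's $\mathcal{C}$ carries the coefficient $I+(\mathcal{G}_T\mathcal{G}_t^{T-1})^\tran\approx 2I$ in front of $\mathcal{C}_T$, not $I$. This is not a higher-order detail: since $(I-(\mathcal{G}_T\mathcal{G}_t^{T-1})^\tran)^{-1}=O(1/\mu)$ along the Perron directions while $\mathcal{C}_T=O(\mu^2)$, the extra term $(\mathcal{G}_T\mathcal{G}_t^{T-1})^\tran\mathcal{C}_T$ changes $z^\tran\mathrm{bvec}(I)$ at order $\mu$, the same order as the MSD itself (for $T=1$ it would double the answer). So either the stated $\mathcal{C}$ encodes something your composition does not capture --- e.g., the steady state being read off at a different point of the adapt/combine cycle, or a cross-term between the step-$T$ noise and the following round --- or your derivation lands on a genuinely different constant; you must either produce that term explicitly or justify its absence, and as written the sketch does not arrive at the stated formula. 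A secondary, convention-level issue: your per-step recursions compose to the transition matrix $(\mathcal{G}_t^{T-1}\mathcal{G}_T)^\tran=\mathcal{G}_T^\tran(\mathcal{G}_t^{T-1})^\tran$, whereas you (and the theorem) write $(\mathcal{G}_T\mathcal{G}_t^{T-1})^\tran=(\mathcal{G}_t^{T-1})^\tran\mathcal{G}_T^\tran$; these need not commute, so you should fix one $\mathrm{bvec}$ convention and make the ordering consistent before inverting $I-\mathcal{F}$.
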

\begin{proof}
	Proof omitted due to space limitations. 
\end{proof}

\section{Experimental Results}\label{sec:exp}
We consider a linear regression problem of the form:
\begin{equation}
	\min_w \frac{1}{K}\sum_{k=1}^K \frac{1}{N} \sum_{n=1}^N \Vert \bm{d}_k(n) - \bm{u}_{k,n}^\tran w\Vert^2 .
\end{equation}
We generate for each agent a data set $\{\bm{d}_{k}(n),\bm{u}_{k,n}\}$ consisting of $N=10^6$ samples. These samples include five dimensional feature vectors $\bm{u}_{k,n}$ drawn from a normal distribution $\mathcal{N}(0,{R}_u)$ and an independent Guassian noise $\bm{v}_k(n) \sim \mathcal{N}(0,\sigma_{v,k}^2)$. A random generative model $w^{\star}$ is sampled from $\mathcal{N}(0,R_w)$ and the labels are determined by $\bm{d}_k(n) = \bm{u}_{k,n}^\tran w^{\star} + \bm{v}_{k}(n)$. For a linear regression problem, the optimal model $w^o$ can be calculated using $\widehat{R}_u$ and $\widehat{r}_{uv}$, which represent the sample covariance and cross-covariance:
\begin{align}
	w^o =   w^\star + \widehat{R}_u^{-1}\widehat{r}_{uv}.
\end{align}
The network comprises of $K = 20$ agents. We consider three different cases of the asynchronous network. Case 1 is the most general case where agent subsampling and local updates occur. The participation probabilities are set to $q_k = 0.5$. Neighbour sampling probabilities $q_{\ell k}$ are randomly assigned. Finally, the parameter $T$ is chosen as $100$. Case 2 only considers agent subsampling and no local updates. Thus, $T=1$ and the probabilities are kept as in case 1. Finally, case 3 assumes full agent participation with no subsampling of neighbourhoods ($q_k=q_{\ell k} =1$), while allows local updates ($T=100$). This case coincides with the FedSGD algorithm.  

Running the algorithm using a step-size $\mu = 0.0001$, we calculate the resultant MSD throughout the algorithm's progression. Afterward, we average the MSD across 5 experiments and illustrate the average curve allongside the theoretical MSD expression found in Theorem \ref{thrm:MSD}. Investigating Fig. \ref{fig:MSDplot}, we observe that as time passes the approximated MSD approaches the theoretical value, even for the general decentralized case.  Moreover, in the context of the given setup, whether local updates are used, as in case 1, or not, as in case 2, an identical behaviour is exhibited. Nonetheless, the primary factor affecting speedup becomes the agents' participation, particularly evident in case 3, where all agents participate. Consequently, the adoption of local updates has no discernible impact on the convergence rate of the algorithm. Furthermore, the three algorithms observe comparible theoretical MSD due to the small employed step-size.

\begin{figure}
	\includegraphics[width=0.5\textwidth]{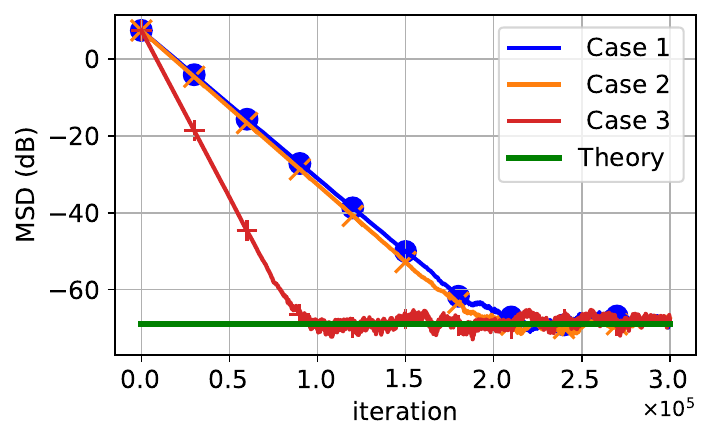}
	\caption{MSD curve for the asynchronous ATC diffusion algorithm.}\label{fig:MSDplot}
\end{figure}

\section{Conclusion}\label{sec:con}
In summary, this study focuses on asynchronous networks, where the primary goal is to solve a learning problem framed as an optimization task. Our investigation revolves around the premise that each agent autonomously decides when to participate in the algorithm and with which subset of its neighbourhood it will collaborate. Additionally, agents perform local updates before sharing their results. In this framework, we observe that in the federated setting the algorithm remains stable by continuously converging to an $O(\mu)$ neighbourhood of the optimal model $w^o$, just like the synchronous version of the algorithm. However, the rate and region of convergence are now influenced by the frequency of the nonparticipation of agents as well as the degree of connectedness of the network. Furthermore, these effects are aggregated in an actual expression of the MSD.

%


\vfill\pagebreak



\end{document}